\documentclass[conference,anonymous]{IEEEtran}
\IEEEoverridecommandlockouts
\usepackage{amsmath,amssymb,amsfonts}
\usepackage{algorithm}
\usepackage{algorithmic}
\usepackage{booktabs}
\usepackage{graphicx}
\usepackage{textcomp}
\usepackage{xcolor}
\usepackage[numbers,sort&compress]{natbib}
\usepackage{bm}         

\usepackage{amsthm}
\newtheorem{thm}{Theorem}
\newtheorem{defn}{Definition}

\def\BibTeX{{\rm B\kern-.05em{\sc i\kern-.025em b}\kern-.08em
    T\kern-.1667em\lower.7ex\hbox{E}\kern-.125emX}}
\begin{document}

\title{Score-based Integrated Gradient for Root Cause Explanations of Outliers}

\author{
\IEEEauthorblockN{Phuoc Nguyen, Truyen Tran, Sunil Gupta, Svetha Venkatesh}
\IEEEauthorblockA{\textit{Applied Artificial Intelligence Initiative (A2I2)} \\
\textit{Deakin University}, Australia \\
\{phuoc.nguyen,truyen.tran,sunil.gupta,svetha.venkatesh\}@deakin.edu.au}
}

\maketitle

\begin{abstract}
Identifying the root causes of outliers is a fundamental problem in causal inference and anomaly detection. Traditional approaches based on heuristics or counterfactual reasoning often struggle under uncertainty and high-dimensional dependencies. We introduce SIREN, a novel and scalable method that attributes the root causes of outliers by estimating the score functions of the data likelihood. Attribution is computed via integrated gradients that accumulate score contributions along paths from the outlier toward the normal data distribution. Our method satisfies three of the four classic Shapley value axioms—dummy, efficiency, and linearity—as well as an asymmetry axiom derived from the underlying causal structure. Unlike prior work, SIREN operates directly on the score function, enabling tractable and uncertainty-aware root cause attribution in nonlinear, high-dimensional, and heteroscedastic causal models. Extensive experiments on synthetic random graphs and real-world cloud service and supply chain datasets show that SIREN outperforms state-of-the-art baselines in both attribution accuracy and computational efficiency.
\end{abstract}

\begin{IEEEkeywords}
Score matching, integrated gradient, root causes, anomaly detection.
\end{IEEEkeywords}

\global\long\def\d{\text{d}}%
\global\long\def\xb{\mathbf{x}}%
\global\long\def\fb{\mathbf{f}}%
\global\long\def\Xb{\mathbf{X}}%
\global\long\def\xb{\mathbf{x}}%
\global\long\def\Zb{\mathbf{Z}}%
\global\long\def\zb{\mathbf{z}}%
\global\long\def\Eb{\mathbf{E}}%
\global\long\def\thb{\bm{\theta}}%
\global\long\def\alb{\bm{\alpha}}%
\global\long\def\xib{\bm{\xi}}%
\global\long\def\epb{\bm{\epsilon}}%
\global\long\def\sib{\bm{\sigma}}%
\newcommand{\Pa}{\text{\ensuremath{\mathsf{Pa}}}}
\newcommand{\An}{\text{\ensuremath{\mathsf{An}}}} 
\newcommand{\De}{\text{\ensuremath{\mathsf{De}}}} 
\newcommand{\Ch}{\text{\ensuremath{\mathsf{Ch}}}} 

\global\long\def\E{\mathbb{E}}%
\global\long\def\R{\mathbb{R}}%
\global\long\def\KL{\text{KL}}%
\global\long\def\Dcal{{\cal D}}%
\global\long\def\Ncal{\mathcal{N}}%
\global\long\def\Mcal{{\cal M}}%
\global\long\def\Xcal{{\cal X}}%
\global\long\def\lcal{\ell}%
\global\long\def\Lcal{{\cal L}}%

\global\long\def\argmax{\operatornamewithlimits{argmax}}%
\global\long\def\argmin{\operatornamewithlimits{argmin}}%

\section{Introduction}

Identifying the root causes of outliers in complex network systems is critical across diverse domains, including manufacturing, cloud services, and distributed computing. Effective root cause analysis facilitates timely interventions, restores normal operations, and prevents costly failures. For example, in modern manufacturing, even brief system downtime can incur substantial economic losses \cite{djurdjanovic2003watchdog}. However, pinpointing the root causes of anomalies is particularly challenging in high-dimensional and noisy monitoring data, which is often exacerbated by random fluctuations and system delays \cite{ni2017ranking,pool2020lumos,yi2021semi,dhaou2021causal}.

Recent causal inference approaches leverage underlying dependency structures among variables to systematically attribute anomalies to their upstream causes \cite{lin2018microscope,liu2021microhecl,budhathoki2022causal,nguyen2024root,okati2024root}. Central to these approaches is a principled measure of "outlierness," which can be effectively decomposed according to the causal graph. For instance, \cite{budhathoki2022causal} frames outlier attribution within an information-theoretic context, employing Shapley values to quantify contributions by averaging marginal effects across variable subsets. Similarly, \cite{mahmood2020multiscale} utilizes the norm of the score function (gradient of the log density) as an information-based measure of anomaly severity.

Despite their successes, existing methods face significant practical limitations. Shapley-based approaches are computationally intensive due to combinatorial subset evaluations and may produce inconsistent, off-manifold samples. Additionally, these methods require access to original training data as empirical distributions for attribution methods such as Shapley values, raising concerns regarding data storage and privacy \cite{frye2020asymmetric,heskes2020causal}. Integrated gradient methods, such as those proposed by \cite{nguyen2024root}, offer computational advantages but are limited to linear Gaussian models.

To overcome these limitations, we propose \textbf{SIREN} (short hand for Score-based Integrated Gradient Root Cause Explanation), a novel framework that integrates conditional score modeling with diffusion-based integrated gradients. SIREN estimates the score function of data distributions conditioned on a causal structure with additive noise. We decompose the score function into a deterministic mean component and a noise-conditioned component. This allows for efficient and principled attribution of anomalies without reliance on training data at inference time. SIREN extends existing approaches by accommodating both linear and nonlinear functional causal models with heteroscedastic noise sources. Our framework reinterprets outliers as violations of causal invariance rather than merely extreme noise realizations, offering deeper insights into context-sensitive changes in system mechanisms.

Our framework relies on standard causal assumptions: data generated by a Functional Causal Model (FCM) structured as a directed acyclic graph (DAG) under causal sufficiency and independent exogenous noise terms. We assume the noise score functions are locally estimable, enabling efficient gradient-based inference along reverse-time diffusion trajectories.

\textbf{Our primary contributions include:}
\begin{enumerate}
\item A conditional score estimation approach leveraging causal structures to sample gradient-based attribution paths efficiently and without access to training data.
\item A computationally efficient decomposition of the score function into mean and noise-conditioned components, facilitating scalable attribution.
\item Establishing a theoretical link between our integrated gradient method and Shapley value theory, explicitly relating attribution scores to differences in tail probabilities.
\item Applicability to nonlinear additive noise models with arbitrary differentiable noise distributions, significantly broadening the range of practical use cases.
\end{enumerate}

\section{Preliminaries}\label{sec:Preliminaries}

We summarize the background and notations needed for our study in
this section. We use capital letters for random variables and lowercase
letters for observations.

\subsection{Outlier Measures}

Outlier measures are crucial in identifying the root causes of outliers.
From the information theory point of view, \cite{budhathoki2022causal}
defines an outlier measure of observing an event $X=x$ as: 
\begin{align}
S_{X}(x) & =-\log P\left\{ -\log p_{X}(X)\ge-\log p_{X}(x)\right\}, \label{eq:IT-score}
\end{align}
where $X$ has density $p_{X}(X)$, and $P_{}(Y)$ is the distribution
of the ``information'' $Y=-\log p_{X}(X)$. By using the information
space of $Y$ instead of the input space of $X$, this scoring function
can characterize outliers in multivariate data and and multimodal
distributions. In addition, it has the property that high measures
corresponding to low-density regions between clusters of data, which
only contain rare events significantly less represented compared to
the majority of data points \cite{budhathoki2022causal}.

\subsection{Functional Causal Models (FCMs)}

We assume there exists a true data generating process governed by
a causal structure represented by a directed acyclic graph $G$. The
nodes of the graph represent the variables generating the real world
observations, and the directed edges from the parent nodes to a node
indicate the causal dependence of that node on its parents. This causal
relationship can be captured in the following functional causal mechanism
\cite{pearl2009causality,peters2017elements}: 

The causal mechanism for each variable is represented using a functional causal model (FCM), which takes the general form:
\begin{equation}
X_j = f_j({\Pa}_j, Z_j),
\label{eq:fcm_general}
\end{equation}
where $f_j$ is a deterministic function capturing the systematic influence of the parents on $X_j$, and $Z_j \sim P(Z_j)$ is an independent noise variable specific to node $j$. 

\subsubsection{Additive Noise Model (ANM)~\cite{hoyer2009nonlinear, buhlmann2014cam}}
In this model, the noise source $Z_j$ enters Eq.~\ref{eq:fcm_general} additively:
\begin{equation}
X_j = f_j(\Pa_j) + Z_j,
\label{eq:cam}
\end{equation}
where the noise $Z_j$ is a general homoscedastic, zero-mean, and independent of the inputs. 
Together, the mean function $f_{j}$ and the noise $Z_{j}$
define the causal mechanism for node $j$. In practice, the functional
form of the $\{f_{j}\}$ can be determined using domain knowledge
or selected from an approximation family, such as linear regression
models \citep{li2022causal,budhathoki2022causal} or Bayesian linear
regression models \citep{nguyen2024root}. 
These models can then be fitted to observational data \( D = \{\mathbf{x}^{(i)}\}_{i=1}^{m} \). The noise is assumed to follow a zero-mean distribution—commonly Gaussian, but potentially uniform or any other zero-mean form. Model parameters can be learned by fitting the residuals \( X_j^{(i)} - f_j(\text{Pa}_j^{(i)}) \), assuming
\[
X_j^{(i)} \sim \mathcal{N}(f_j(\text{Pa}_j^{(i)}), \sigma_j^2),
\]
where \( \sigma_j^2 \) is constant.

\subsubsection{Location-scale Noise Model (LSN)~\cite{huang2015generalized}}
In practice, noise may not enter purely additively or with fixed variance. A more general and realistic class of models is the location-scale noise model~\cite{huang2015generalized}, in which both the mean and variance of the noise can vary with the parents:
\begin{equation}
X_j = f_j(\text{Pa}_j) + \sigma_j(\text{Pa}_j) \cdot Z_j
\label{eq:lsn}
\end{equation}

This formulation generalizes ANMs by incorporating heteroscedasticity---i.e., noise variance that depends on the input. This flexibility enables the model to capture real-world phenomena such as supply chain delay and financial volatility. These models can be fitted to observational data \( D = \{\mathbf{x}^{(i)}\}_{i=1}^{m} \) by maximizing the likelihood of the Gaussian model
\[
X_j^{(i)} \sim \mathcal{N}(f_j(\text{Pa}_j^{(i)}), \sigma_j^2(\text{Pa}_j^{(i)})),
\]
where \( f_j \) and \( \sigma_j^2 \) are parameterized functions learned from data. Together, the mean function $f_{j}$, the scale function $\sigma_{j}$, and the noise $Z_{j}$ define the causal mechanism for node $j$ with heteroscedasticity noises.
 
\subsection{Root Cause Attributions}

Based on a given causal structure, \cite{budhathoki2022causal} proposed
a method to identify root causes by attributing the outlier measure
observed at a target leaf node to the independent noise source of
every node. The authors first fit the FCMs to the normal observational
data, then inverse the independent noises $z_{j}$ from this observational
data by computing the residuals 
\begin{equation}
z_{j}=x_{j}-f_{j}\left(\mathbf{x}_{\Pa_{j}}\right)
\label{eq:residual}
\end{equation}

In order to use the Shapley attribution method \cite{shapley1953value,sundararajan2020many}
to explain an outlier event at a leaf $x_{n}=f_{n}(\Pa_{n},z_{n})$,
the authors expand the FCMs to express $x_{n}$ as a function depending
directly on all these noise variables\footnote{where we abuse notation and denote the same $f$ to mean the same
data generating process but rewritten to take input $\mathbf{z}$}: 
\begin{equation}
x_{n}=f(z_{1},\dots,z_{n})=f(\mathbf{z})\label{eq:noise-dependent-func}
\end{equation}
In this way, the noise variables $\mathbf{z}$ play the role of selecting
deterministic mechanisms. 

We clarify the sentence by grounding it in the canonical representation of Functional Causal Models (FCMs), as in Peters et al.\ (2017, Ch.\ 3). In this view, the structural equation
\[
Y := f(X, Z)
\]
can equivalently be seen as a stochastic function over $X$, where the exogenous noise $Z$ selects a deterministic function. More precisely, if we fix $Z = z$, then $Y$ becomes a deterministic function
\[
Y := f_z(X) = f(X, z),
\]
so the noise $Z$ selects which deterministic response function governs the relationship between $X$ and $Y$. Thus, $Z$ induces a distribution $P(Z)$ over the function space $Y^X$, turning the FCM into $Y := Z(X)$, where $Z \in Y^X$ is a random function. In this response function representation, the noise no longer perturbs the output directly, but selects which mechanism (function) maps inputs to outputs.

In this formulation, each draw of the noise variable $z$ corresponds to selecting a deterministic mechanism $f_z$, turning the stochastic FCM into a distribution over deterministic functions from inputs to outputs.

Thus, an outlier mechanism $j$ will be determined
by an outlier noise $z_{j}$ and is likely to be the root cause of
an observed outlier $x_{n}$ at leaf $n$. Note that the outlier measure
in Eq.\ \ref{eq:IT-score} for the leaf $n$ will depend on the noise
$\mathbf{z}$ as follows: 
\begin{align}
S_{X_{n}}(x_{n}) & =-\log P\left\{ -\log p(X_{n})\ge-\log p(x_{n})\right\} \nonumber \\
 & =-\log P\left\{ -\log p(f(\mathbf{Z}))\ge-\log p(f(\mathbf{z}))\right\} \label{eq:IT-score-f(z)}
\end{align}
where the residual $\mathbf{z}$ is calculated using Eq.\ \ref{eq:residual}.
To determine the real contribution of each node $j$ to the leaf node
outlier measure $S_{X_{n}}(x_{n})$, its sensitivity w.r.t. to an
ancestor node $j$ need to be taken into account given the context
of the remaining noise sources $\mathbf{z}_{N\backslash j}$. For
this purpose, the authors use Shapley values \cite{shapley1953value,sundararajan2020many}
a concept from cooperative game theory to correctly measure the contribution
of each noise term to $S_{X_{n}}(x_{n})$.

\paragraph{Shapley Values}

Here, the $n$ noises $\mathbf{z}=\{z_{1},\dots,z_{n}\}$ are assumed
to play a game by propagating their values through the FCMs and eventually
contributing to the outlier measure $S_{X_{n}}(x_{n})$. Let $v:2^{n}\rightarrow\mathbb{R}$
be a value function \cite{von2007theory} which map a subset $I\subseteq N$
of noise variables to a real number representing the final outlier
measure caused by this coalition $I$. The value function is defined
as \cite{lundberg2017unified}: 
\begin{equation}
v(I)=\E_{p_{Z}(z')}\left[S_{X_{n}}(z_{I}\cup z'_{\bar{I}})\right]\label{eq:value-function}
\end{equation}
where $\bar{I}=N\setminus I$ is the out-of-coalition set whose noises,
$z'_{\bar{I}}$, are marginalized out, and the outlier measure $S_{X_{n}}(\zb)$
at leaf $n$ is defined as in Eq.\ \ref{eq:IT-score-f(z)}. The target
value $v(N)$ where all observed noises are considered can then be
uniquely decomposed as 
\begin{equation}
v(N)=v(\{\})+\sum_{j\in N}\phi_{v}(j)\label{eq:decompose-shapley-value}
\end{equation}
where $v(\{\})$ is the baseline value of empty set when no noise
is considered, i.e. marginalized out, and $\phi_{v}(j)$ is the Shapley
value of each node $j$. This Shapley value is defined as\cite{sundararajan2020many}:
\begin{align}
\phi_{v}(j) & =\sum_{I\subseteq N\backslash j}\frac{|I|!(n-|I|-1)}{n!}\phi_{v}(j|I),\label{eq:shapley-marginal}\\
\text{and }\phi_{v}(j|I) & =v(I\cup j)-v(I)\label{eq:shapley-conditional}
\end{align}
where $\phi_{v}(j)$ is the marginal Shapley value by averaging over
all possible coalitions, and $\phi_{v}(j|I)$ is the Shapley value
of node $j$ given the coalition $I$. In practice, it is typical
to use subset sampling of the orderings to reduce computation. Eq.\ \ref{eq:shapley-marginal}
shows that the Shapley value $\phi_{v}(j)$ represents the marginal
contribution of the node $j$.

\paragraph{Integrated Gradient (IG)}

Instead of using Shapley values, \cite{nguyen2024root} proposed to
use integrated gradient (IG) \cite{lundberg2017unified} to compute
the outlier measure attributions to each noise variable $z_{j}$.
This method requires a noise reference which can be sampled from the
normal noise distribution $\mathbf{z}'\sim p_{\mathbf{Z}}$ to explain
the root cause of an outlier observation $x_{n}=f(\mathbf{z})$. The
attribution value of node $i$ is defined as line integral: 
\begin{align}
\text{IG}{}_{j}(\mathbf{z},\mathbf{z}') & =(z_{j}-z_{j}')\int_{t=0}^{1}\frac{\partial S_{X_{n}}(\mathbf{z}(t))}{\partial z_{j}}dt\label{eq:IG-straight-line}
\end{align}
where the line $\mathbf{z}(t)=\mathbf{z}'+t(\mathbf{z}-\mathbf{z}')$,
and $S_{X_{n}}$ is the outlier measure function in Eq.\ \ref{eq:IT-score}.
This attribution function computes the contribution of the noise $z_{i}$
to the final outlier measure by integrating the gradient of the outlier
measure along the line from $z_{i}$ to the reference $z'_{i}$, hence
the name IG. In practice, the continuous path is discretized for Riemann
integration.

\section{Methods }\label{sec:Methods}

\subsection{Score-based Integrated Gradient}

\label{subsec:Score-based-Integrated-Gradient}

Instead of computing the gradient term of the integrated gradient
(IG) attribution in Eq.\ \ref{eq:IG-straight-line}, we propose using
a score function estimator $s_{X_{n}}(\mathbf{z};\theta)$ with parameters
$\theta$ to directly output the gradient of an outlier measure at
node $X_{n}$ with respect to its input noise vector $\mathbf{z}$.
This approach is more efficient and does not require access to the
density function $p_{Z}$, which may be unknown or intractable. First,
let us define a new outlier attribution measure that relies on this
score function estimator, after which we will describe the motivation
behind these choices. 

\begin{defn}
\textbf{Score-based Outlier Measure}. Assume the causal structure
as in Section \ref{sec:Preliminaries}, and $x_{n}=f(\mathbf{z})$
as in Eq.\ \ref{eq:noise-dependent-func}. The score-based outlier
measure at leaf $X_{n}$, denoted as $S_{X_{n}}(\mathbf{z})$, is
defined as the negative log of the density of $X_{n}$: 
\begin{equation}
S_{X_{n}}(\mathbf{z})=-\log p_{X_{n}}(x_{n})=-\log p_{X_{n}}(f(\mathbf{z}))\label{eq:score-outlier-measure}
\end{equation}
\end{defn}

In order to compute the gradient of this outlier measure for outlier
attribution, we use a score function estimator $s^{n}(\mathbf{z};\theta)$
(with a lowercase ``$s$'') with parameters $\theta$ to predict
the gradient of the log density at node $X_{n}$, defined as $s_{}^{n}(\mathbf{z};\theta)=\nabla_{\mathbf{z}}\log p_{X_{n}}(f(\mathbf{z}))$.
We will describe how to train this model later. 

\begin{defn} \textbf{Baseline Outlier Attribution}\label{def:Baseline-Outlier-Attribution.}. Let
$\mathbf{x}^ {}$ be a target outlier with an observed outlier score
$S_{X_{n}}(\mathbf{z})$ at leaf $X_{n}$, and let $\mathbf{z}$ be
its inverse noise (Eq.\ \ref{eq:residual}). Let $\mathbf{z}'\sim p_{\mathbf{Z}}$
be a baseline reference, and let $\mathbf{z}(t)|_{t\in[0,1]}$ a path
from $\mathbf{z}(0)=\mathbf{z}'$ to $\mathbf{z}(1)=\mathbf{z}$.
We define the baseline outlier contribution of a node $j$ to the
target outlier score $S_{X_{n}}(\mathbf{z})$ based on the path integral
as:
\begin{align}
 & \approx\frac{1}{2}(z_{j}-z'_{j})\int_{z_{j}(0)}^{z_{j}(1)}-s_{j}^{n}(\mathbf{z}(t);\theta)_{}dz_{j}(t)\label{eq:path-measure-score}
\end{align}
where $s_{j}^{n}(\mathbf{z}(t);\theta)_{}=\nabla_{z_{j}}\log p(\mathbf{z})$
is the $j$-th component of the score estimator at node $X_{n}$.
\end{defn}

Above equations are similar to Eq.~\ref{eq:path-measure-score} in the prelim section. We reparameterize the integration variable from $dt$ to $dz$ to facilitate the gradient path integration with respect to the $z$ variable directly and scale it by $1/2$ to emphasize the triangle approximation of the tail probability area. We added this to the revised version.

The Equation (1) and (5) directly measure the tail probability as outlier measure and use it for explaining outlier. This has several drawbacks such as it is intractable due to unknown density or if using empirical estimate it is inconvenient due to data privacy and storage. We showed that by using Equation (11) to measure the log likelihood in combination with integrated gradient of the diffusion paths we can similarly approximate the tail probability as shown in Figure~(1) and thus explaining the outlier.

For implementation, we approximate the integration in Eq.\ \ref{eq:path-measure-score}
by discretizing the path $\mathbf{z}(t)$ into $k$ time steps as
follows: 
\begin{align}
\bm{\xi}_{j}(\mathbf{z},\mathbf{z}') & \approx(z_{j}-z'_{j})\sum_{i=1}^{k}-s_{j}^{n}(\mathbf{z}(t_{k});\theta)(z_{j}(t_{i})-z_{j}(t_{i-1}))\label{eq:path-discretized}
\end{align}

Next, we define the reference-free version of this definition by marginalizing
over the baseline variable. \begin{defn} \textbf{Score-based Outlier
Attribution}\label{def:Score-based-Outlier-Attribution.}. The marginal
outlier contribution of a node $j$ to the target outlier score $S_{X_{n}}(\mathbf{z})$
is defined as: 
\begin{align}
\bm{\xi}_{j}(\mathbf{z}) & =\E_{\mathbf{z}'\sim p_{\mathbf{Z}}}\bm{\xi}_{j}(\mathbf{z},\mathbf{z}')\label{eq:score-measure}\\
 & \approx\frac{1}{m}\sum_{\mathbf{z}^{(i)}\sim p_{\mathbf{Z}}|{}_{i\in\{1,\dots,m\}}}\bm{\xi}_{j}(\mathbf{z},\mathbf{z}^{(i)})\label{eq:score-measure-MC}
\end{align}
where we used $m$ Monte Carlo samples of the reference noise distribution
in Eq.\ \ref{eq:score-measure-MC} to approximate Eq.\ \ref{eq:score-measure}.
This score-based attribution definition satisfies the following axioms
for our outlier attribution problem \cite{shapley1953value,sundararajan2020many,frye2020asymmetric}:
\end{defn}
\begin{itemize}
\item \textbf{Axiom 1 (Efficiency)}: $\sum_{j\in N}\phi_{v}(j)=v(N)-v(\{\})$ 
\item \textbf{Axiom 2 (Linearity)}: \textit{$\phi_{\alpha u+\beta v}=\alpha\phi_{u}+\beta\phi_{v}$
for any value functions $u$, $v$ and any $\alpha,\beta\in\mathbb{R}$
} 
\item \textbf{Axiom 3 (Dummy)}: \textit{$\phi_{v}(j)=0$ whenever $v(I\cup j)=v(I)$
for all $I\subseteq N\backslash j$} 
\item \textbf{Axiom 4 (Asymmetry)}:\textit{ $\phi_{v}(i)=v(\{j:\pi(j)\le\pi(i)\})-v(\{j:\pi(j)<\pi(i)\})$
where $\pi$ is a topological order of the given causal graph.} 
\end{itemize}
\begin{thm} \textbf{Properties of the score-based outlier attribution}.
The score-based outlier attribution in Definition \ref{def:Score-based-Outlier-Attribution.}
satisfies the efficiency, linearity, dummy, and asymmetry axioms for
the outlier attribution problem. \end{thm}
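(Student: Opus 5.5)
The plan is to treat $\bm{\xi}$ as an operator that sends a value (outlier-measure) function to a vector of attributions. For each baseline $\mathbf{z}'$ I identify the game with the path functional $v_{\mathbf{z}'}(I)=S_{X_n}(\mathbf{z}_I\cup\mathbf{z}'_{\bar I})$, where $S_{X_n}$ is the score-based measure of Eq.~\ref{eq:score-outlier-measure}. I then verify each axiom pointwise in $\mathbf{z}'$ for the baseline attribution of Definition~\ref{def:Baseline-Outlier-Attribution.}, and pass to Definition~\ref{def:Score-based-Outlier-Attribution.} by linearity of $\E_{\mathbf{z}'\sim p_{\mathbf{Z}}}$. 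This also recovers the marginalised value function of Eq.~\ref{eq:value-function}. Throughout I assume the estimator is exact, $s^n(\cdot;\theta)=\nabla_{\mathbf{z}}\log p_{X_n}(f(\cdot))=-\nabla_{\mathbf{z}}S_{X_n}$, since the axioms are properties of the target quantity. The weight $\tfrac12(z_j-z'_j)$ and the diffusion path $\mathbf{z}(t)$ are kept exactly as in Eq.~\ref{eq:path-measure-score}.

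\textbf{Linearity and dummy.} These are the easy cases and I do them first. The map $S\mapsto -s_j=\partial_{z_j}S$ is linear, and the line integral $\int_{z_j(0)}^{z_j(1)}(\cdot)\,dz_j(t)$ is linear. The prefactor $\tfrac12(z_j-z'_j)$ and the path $\mathbf{z}(t)$ do not depend on the value function, and neither does the outer expectation. Hence $\bm{\xi}_j[\alpha u+\beta v]=\alpha\bm{\xi}_j[u]+\beta\bm{\xi}_j[v]$ holds exactly, including the factors. For dummy, suppose $v(I\cup j)=v(I)$ for all $I$ and all baselines. Then $S_{X_n}$ is constant in $z_j$, so $s^n_j\equiv0$ on every path and the integrand vanishes. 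Therefore $\bm{\xi}_j(\mathbf{z},\mathbf{z}')=0$ for every $\mathbf{z}'$, and hence $\bm{\xi}_j(\mathbf{z})=0$.

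\textbf{Efficiency.} Here I use the gradient theorem. Since $-s^n=\nabla S_{X_n}$ is conservative, $\sum_j\int_{z_j(0)}^{z_j(1)}-s^n_j\,dz_j(t)=S_{X_n}(\mathbf{z})-S_{X_n}(\mathbf{z}')=v(N)-v(\{\})$ for any path, including the reverse-diffusion path. The weight $\tfrac12(z_j-z'_j)$ must then be reconciled with this identity. I would read it, as the paper does, as the triangle approximation of the tail-probability area in Fig.~1. Concretely, I would show that along the diffusion path each coordinate contribution equals $\tfrac12(z_j-z'_j)$ times the mean slope. The weighted sum then reproduces the same telescoping total to the order of the approximation already signalled by ``$\approx$'' in Eq.~\ref{eq:path-measure-score}. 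The outer expectation then gives $\sum_j\bm{\xi}_j(\mathbf{z})=v(N)-v(\{\})$.

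\textbf{Asymmetry.} I expect this to be the main obstacle, because it is path-dependent and the path is the diffusion trajectory rather than a chosen staircase. My plan is to use the causal factorisation. With independent noises, $z_i$ reaches $x_n$ only through $\De_i$. The decomposition of $s^n$ into a mean component and a noise-conditioned component then gives a specific structure for $\partial_{z_i}\log p_{X_n}$. In that structure, the dependence on $z_j$ with $\pi(j)>\pi(i)$ enters only through the noise-conditioned factor, which is evaluated at the reference. I would then show that the contribution of coordinate $i$ along the path equals the increment of $S_{X_n}$ obtained by switching $z_i$ from $z'_i$ to $z_i$ while the $\pi$-predecessors already sit at the target values. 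Integrating this gives $v(\{j:\pi(j)\le\pi(i)\})-v(\{j:\pi(j)<\pi(i)\})$.

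If this conditional-independence argument does not control the cross-terms for a general diffusion path, my fallback is weaker. I would prove asymmetry in expectation over $\mathbf{z}'$, using exchangeability of the reference noise to average out the non-ancestral coordinates, and state the exact statement only under that averaging.
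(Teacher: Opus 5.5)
Your linearity and dummy arguments match the paper's and are fine. The efficiency step, however, does not go through as planned. With the weight kept, $\sum_j\tfrac12(z_j-z'_j)\int_{z_j(0)}^{z_j(1)}-s^n_j\,dz_j(t)$ does not telescope to $v(N)-v(\{\})$. Already in one dimension it equals $\tfrac12(z-z')\bigl(S_{X_n}(z)-S_{X_n}(z')\bigr)$, and there is no small parameter in which this is ``approximately'' $S_{X_n}(z)-S_{X_n}(z')$. The $\approx$ in Eq.~\eqref{eq:path-measure-score} refers to the triangle heuristic for tail probabilities, not to this identity. The paper's proof simply absorbs the factor, which amounts to proving efficiency for the unweighted integral $\int-s^n_j\,dz_j$. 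That is the version that is actually true, and you should state it that way. Your phrase ``for any path, including the reverse-diffusion path'' also needs care. The reverse-SDE trajectory has unbounded variation, so the gradient theorem holds for the Stratonovich integral, or for the exact integral along the polygonal Euler--Maruyama path. The left-point sums produced by Algorithm~\ref{alg:score_sampling}, summed over $j$, instead converge as the step shrinks to $S_{X_n}(\mathbf z)-S_{X_n}(\mathbf z')$ plus an It\^o correction $\pm\tfrac12\int\sigma^2(t)\,\Delta S_{X_n}(\mathbf z(t))\,dt$.

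The asymmetry step is the decisive gap, and it cannot be closed for the diffusion path because the identity is false there. Your key claim is that the $z_j$ with $\pi(j)>\pi(i)$ enter $\partial_{z_i}\log p_{X_n}$ only through a factor evaluated at the reference. This is wrong: in the chain rules \eqref{eq:chain-rule-2}--\eqref{eq:chain-rule-3} every factor is evaluated at the current point $\mathbf z(t)$, and the reverse SDE moves all coordinates simultaneously, never one at a time. For a counterexample, let $X_1=Z_1$ and $X_2=X_1+Z_2$ with $Z_1,Z_2\sim\mathcal N(0,1)$ independent, so $S_{X_2}(\mathbf z)=\tfrac14(z_1+z_2)^2+\mathrm{const}$ and $\pi=(1,2)$. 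Along any path $\gamma$ from $\mathbf z'$ to $\mathbf z$, the unweighted contribution of node $1$ is $\tfrac14\bigl(z_1^2-(z'_1)^2\bigr)+\tfrac12\int_\gamma z_2\,dz_1$. The axiom instead demands $v_{\mathbf z'}(\{1\})-v_{\mathbf z'}(\{\})=\tfrac14\bigl(z_1^2-(z'_1)^2\bigr)+\tfrac12 z'_2(z_1-z'_1)$. The two agree only if $\gamma$ encloses zero signed area with the staircase that moves $z_1$ first. For the straight line the discrepancy is $\tfrac14(z_1-z'_1)(z_2-z'_2)$, whose average over $\mathbf z'\sim\mathcal N(0,I)$ is $\tfrac14 z_1z_2\neq0$, so your in-expectation fallback fails too. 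The paper does not supply the missing argument either. For this axiom it only remarks that chain-rule sensitivities differ across nodes, which at most explains why symmetry fails, not why the displayed identity holds. What you can prove is narrower. The identity holds exactly, for every $\mathbf z'$ and hence after averaging, when the unweighted integral is taken along the staircase path that moves $z_{\pi^{-1}(1)},z_{\pi^{-1}(2)},\dots$ one coordinate at a time from $\mathbf z'$ to $\mathbf z$. On the segment where only $z_i$ moves, the gradient theorem gives $v_{\mathbf z'}(\{j:\pi(j)\le\pi(i)\})-v_{\mathbf z'}(\{j:\pi(j)<\pi(i)\})$, and $dz_i=0$ on all other segments. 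That, however, is a different attribution from the diffusion-path one in Definition~\ref{def:Score-based-Outlier-Attribution.}.
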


\begin{proof} \textbf{Axiom 1 (Efficiency)}. If we absorb the constant
$\frac{1}{2}(z_{j}-z'_{j})$ into the outlier measure or the score
function, Eq.\ \ref{eq:score-measure} becomes: 
\begin{align*}
\xi_{j}(\mathbf{z},\mathbf{z}') & =\int_{z_{j}(0)}^{z_{j}(1)}\frac{\partial S_{X_{n}}(\mathbf{z}(t))}{\partial z_{j}}dz_{j}(t)
\end{align*}
Then, 
\begin{align*}
\sum_{j\in N}\xi_{j}(\mathbf{z},\mathbf{z}') & =\sum_{j\in N}\int_{z_{j}(0)}^{z_{j}(1)}\frac{\partial S_{X_{n}}(\mathbf{z}(t))}{\partial z_{j}}dz_{j}(t)\\
 & =\int_{\mathbf{z}'}^{\mathbf{z}}\frac{\partial S_{X_{n}}(\mathbf{z}(t))}{\partial\mathbf{z}}\cdot d\mathbf{z}(t)\\
 & =S_{X_{n}}(\mathbf{z})-S_{X_{n}}(\mathbf{z}')\\
 & =v(N)-v(\{\})
\end{align*}
where $v(N)$ corresponds to the outlier measure $S_{X_{n}}(\mathbf{z})$
when \textit{all} components in the outlier noise $\mathbf{z}$ is
considered, and $v(\{\})$ corresponds to \textit{none} of the components
in the outlier noise $\mathbf{z}$ being considered, i.e., exchanged
for the components in the baseline noise $\mathbf{z}'$. Thus, our
attribution method also satisfies the efficiency axiom. The baseline
reference-free version of the attribution is achieved by taking the
expectation over $\mathbf{z}'$.

\textbf{Axiom 2 (Linearity)}. The Linearity axiom follows from the
linearity of the expectation, integration, and derivation operators.
Assuming the same setting as in Definition 2, let $u$ and $v$ be
two outlier measures as defined in Definition\ 1, with corresponding
score functions $s^{n}(z(t);\theta_{u})$ and $s^{n}(z(t);\theta_{v})$,
where $\theta_{u}$ and $\theta_{v}$ are their respectively parameters,
and $\alpha,\beta\in\mathbb{R}$. We will show that $\xi(z,z';\alpha u+\beta v)=\alpha\xi(z,z';u)+\beta\xi(z,z';v)$,
where the dependence of the attribution $\xi$ on the outlier measure
is explicitly shown in its argument. For each node $j$, we have:
\begin{align*}
 & \xi_{j}(\mathbf{z},\mathbf{z}';\alpha u+\beta v)=\\
 & =\frac{1}{2}(z_{j}-z'_{j})\int_{z_{j}(0)}^{z_{j}(1)}\frac{\partial(\alpha u+\beta v)(\mathbf{z}(t))}{\partial z_{j}}dz_{j}(t)\\
 & =\frac{1}{2}(z_{j}-z'_{j})\int_{z_{j}(0)}^{z_{j}(1)}\left(\alpha\frac{\partial u(\mathbf{z}(t))}{\partial z_{j}}+\beta\frac{\partial v(\mathbf{z}(t))}{\partial z_{j}}\right)dz_{j}(t)\\
 & =\frac{1}{2}(z_{j}-z'_{j})\int_{z_{j}(0)}^{z_{j}(1)}\left(\alpha\frac{\partial u(\mathbf{z}(t))}{\partial z_{j}}+\beta\frac{\partial v(\mathbf{z}(t))}{\partial z_{j}}\right)dz_{j}(t)\\
 & =\frac{1}{2}\alpha(z_{j}-z'_{j})\int_{z_{j}(0)}^{z_{j}(1)}\frac{\partial u(\mathbf{z}(t))}{\partial z_{j}}dz_{j}(t)\\
 & +\frac{1}{2}\beta(z_{j}-z'_{j})\int_{z_{j}(0)}^{z_{j}(1)}\frac{\partial v(\mathbf{z}(t))}{\partial z_{j}}dz_{j}(t)\\
 & =\alpha\xi_{j}(\mathbf{z},\mathbf{z}';u)+\beta\xi_{j}(\mathbf{z},\mathbf{z}';v)
\end{align*}

This holds of all node $j\in N$. Therefore, the outlier attribution
method is linear with respect to the outlier measure. The baseline
reference-free version of the attribution is achieved by taking expectation
over $\mathbf{z}'$.

\textbf{Axiom 3 (Dummy)}. The Dummy axiom follows from the zero derivative
of constant functions. Suppose $v(I\cup j)=v(I)$ for all $I\subseteq N\backslash j$.
This means that the outlier measure $v$ is independent of the input
dimension $j$. Therefore, the partial derivative $\partial v(\mathbf{z}(t))/\partial z_{j}=0$
for any $\mathbf{z}(t)$. As a result, $\phi(j;v)=0$.

\textbf{Axiom 4 (Asymmetry)}. The discrepancy arises from the different
sensitivities of the leaf with respect to these nodes, as shown in
the chain rules in Eq.\ \ref{eq:chain-rule-1}, \ref{eq:chain-rule-2},
and \ref{eq:chain-rule-3}. Thus, similar noise values or changes
may be suppressed or magnified when they reach the leaf node. A similar
point has also been raised in explaining neural network predictions
\cite{frye2020asymmetric}. \end{proof} 
\begin{figure}
\begin{centering}
\includegraphics[width=0.99\columnwidth]{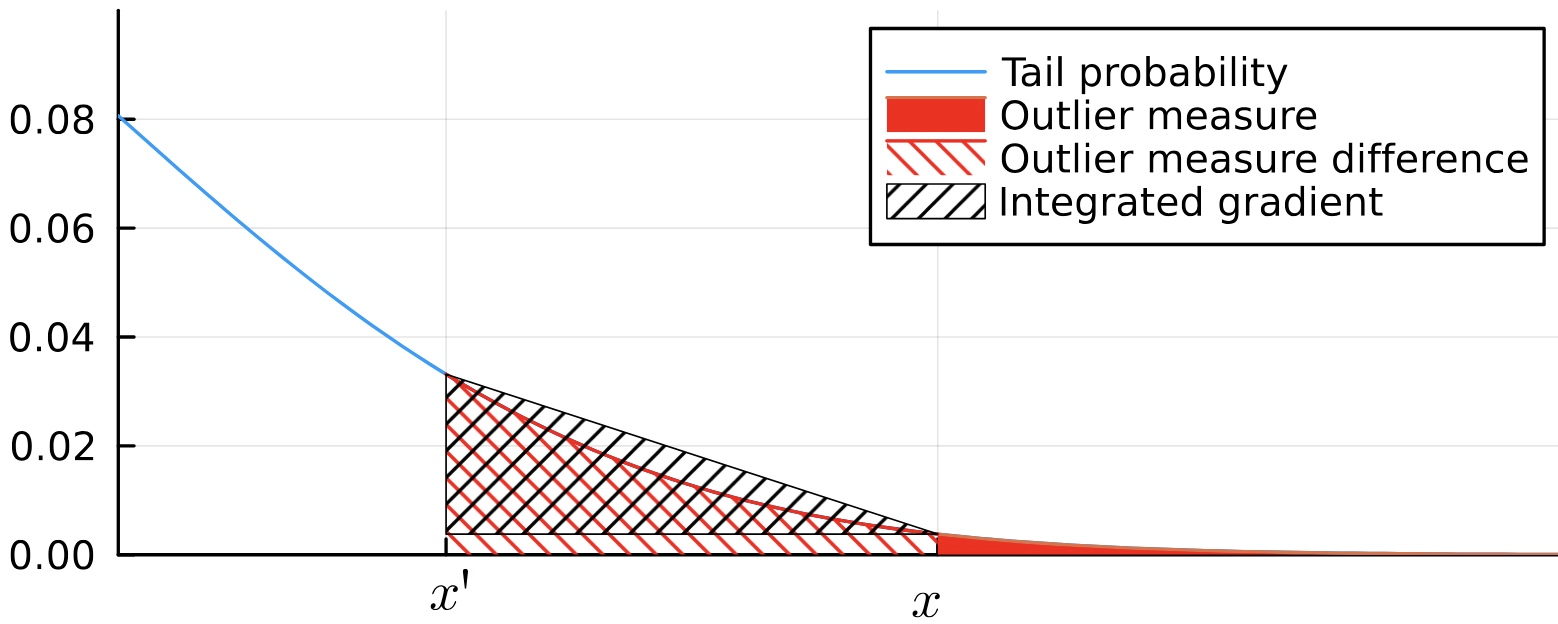}
\par\end{centering}
\caption{Outlier measure difference between a target outlier $x$ and a reference
point $x'$ as the difference in tail probabilities. For score-based
outlier attribution, the expected contribution is the triangle area
which is approximately the difference in tail probability $p(X\ge x')-p(X\ge x)$.}
\label{fig:contribution-Shapley-vs-IG}
\end{figure}

We now provide a non-rigorous motivation behind these definitions
by characterizing the expected change of the tail probability area
when moving from an outlier point $x$ to an in-distribution point
$x'$. Let us assume a simple setting where $X=X_{n}=Z_{j}$ for all
$j$. Let $p_{X_{}}$ be a continuously differentiable density function.
Then, by using the fundamental theorem of calculus, we can express
the expectation of the path integral as the difference in density
multiplied by the path distance: 
\begin{align}
\E_{p_{X_{t}}} & \frac{1}{2}(x-x')\int_{x'}^{x}\frac{\partial S_{X_{}}(x_{t})}{\partial x_{t}}dx_{t}\nonumber \\
 & =\frac{1}{2}(x-x')\int_{x'}^{x}p_{X_{t}}(x_{t})\frac{-\partial\log p_{X_{t}}(x_{t})}{\partial x_{t}}dx_{t}\nonumber \\
 & =\frac{1}{2}(x-x')\int_{x'}^{x}\frac{-\partial p_{X_{t}}(x_{t})}{\partial x}dx_{t}\nonumber \\
 & =\frac{1}{2}(x-x')(p_{X_{t}}(x')-p_{X_{t}}(x))\label{eq:triangle-area}
\end{align}

Eq.\ \ref{eq:triangle-area} represents the area of the triangle
with base $x-x'$ and height $p_{X_{t}}(x')-p_{X_{t}}(x)$. Fig.\ \ref{fig:contribution-Shapley-vs-IG}
illustrates how this triangle area approximates the difference in
tail probabilities between the outlier $x$ and the normal $x'$.
Since tail probability indicates the extremeness of $x$, the increase
in tail probability from the outlier x to a normal x' represents the
contribution of $x$ to the outlier measure when $x$ is to be exchanged
for a normal $x'$. This explanation aligns with the Shapley values
in Eq.\ \ref{eq:shapley-conditional} where the contribution of a
noise $z_{j}$, denoted as $\phi_{v}(j|I)$, is the difference between
the outlier measure when $z_{j}$ is included, $v(I\cup j)$, and
when it is excluded, $v(I)$ - that is, when it is exchanged for a
normal $z'_{j}$.

\subsection{Score Function Estimator}

Now, we describe the method to estimate the score function $s^{n}(\mathbf{z};\theta)$
with parameters $\theta$ to predict the gradient of the log density
for a node $n$ w.r.t. the noise vector $\mathbf{z}$ in Eq. \ref{eq:path-measure-score}.
Here, we will estimate a set of score functions $\{s^{i}(\mathbf{z};\theta)\}_{i=1,\dots,n}$
for every node $i$ in the graph $G$, given its ancestor noise vector
$\mathbf{z}$. Next, we will analyze and derive an efficient score
estimation using conditional score matching.

\subsubsection{Conditional Score Matching (CSM)}

First, the conditional density function of a node $X_{i}$ given its
parent nodes can be written as follows: 
\[
p_{X_{i}}(x_{i}|x_{\Pa_{i}})=p_{Z_{i}}(z_{i}|x_{\Pa_{i}})=p_{Z_{i}}(x_{i}-f_{i}(x_{\Pa_{i}}))
\]
where the noise $z_{i}=x_{i}-f_{i}(x_{\Pa_{i}})$ is assumed to have
a zero-mean density $p_{Z_{i}}$. Therefore, the density $p_{X_{i}}$
depends on both the observation $x_{i}$, the mean function $f_{i}$
and observation of the parent observations of the parent nodes $x_{\Pa_{i}}$.
Recursively, through $f_{i}$ and $x_{\Pa_{i}}$, $p_{X_{i}}$ depends
on all ancestor noise sources, denoted as $\mathbf{z}=\mathbf{z}_{\le i}$,
where $\le i$ indicates the indices of node $i$ and its ancestor
nodes. The score function of $x_{i}$ w.r.t. $z_{j}$, $j\le i$,
can be written as follows:

(i) If $j=i$ then: 
\begin{align}
s_{i}^{i}(x_{i}|\xb_{\Pa_{i}}) & =\nabla_{z_{i}}\log p_{X_{i}}(x_{i}|\xb_{\Pa_{i}})\nonumber \\
 & =\nabla_{z_{i}}\log p_{Z_{i}}(x_{i}-f_{i}(\xb_{\Pa_{i}}))\nonumber \\
 & =\frac{\partial\log p_{Z_{i}}(x_{i}-f_{i}(\xb_{\Pa_{i}}))}{\partial z_{i}}\nonumber \\
 & =\frac{\partial\log p_{Z_{i}}(z_{i})}{\partial z_{i}}\label{eq:chain-rule-1}
\end{align}

(ii) If $j\in\Pa_{i}$ then: 
\begin{align}
s_{j}^{i}(x_{i}|\xb_{\Pa_{i}}) & =\nabla_{z_{j}}\log p_{X_{i}}(x_{i}|\xb_{\Pa_{i}})\nonumber \\
 & =\nabla_{z_{j}}\log p_{Z_{i}}(x_{i}-f_{i}(\xb_{\Pa_{i}}))\nonumber \\
 & =\frac{\partial\log p_{Z_{i}}(x_{i}-f_{i}(\xb_{\Pa_{i}}))}{\partial z_{i}}\frac{(-\partial f_{i}(\xb_{\Pa_{i}})))}{\partial z_{j}}\nonumber \\
 & =\frac{\partial\log p_{Z_{i}}(z_{i})}{\partial z_{i}}\frac{(-\partial f_{i}(\xb_{\Pa_{i}}))}{\partial x_{j}}\nonumber \\
 & =s_{i}^{i}(x_{i}|\xb_{\Pa_{i}})\frac{(-\partial f_{i}(\xb_{\Pa_{i}}))}{\partial x_{j}}\label{eq:chain-rule-2}
\end{align}

(iii) If $j<i$ is \emph{not} a direct parent of $i$, then $j$ has
children $\Ch_{j}$ and the score recursively decomposes with respect
to each node $k\in\Ch_{j}$ as: 
\begin{align}
s_{j}^{i}(x_{i}|\xb_{<i}) & =\nabla_{z_{j}}\log p_{X_{i}}(x_{i}|\xb_{<i})\nonumber \\
 & =\sum_{k\in\Ch_{j}}\nabla_{z_{k}}\log p_{X_{i}}(x_{i}|\xb_{<i})\frac{\partial x_{k}}{\partial z_{j}}\nonumber \\
 & =\sum_{k\in\Ch_{j}}s_{k}^{i}(x_{i}|\xb_{<i})\frac{\partial f_{k}(\xb_{\Pa_{k}}))}{\partial z_{j}}\nonumber \\
 & =\sum_{k\in\Ch_{j}}s_{k}^{i}(x_{i}|\xb_{<i})\frac{\partial f_{k}(\xb_{\Pa_{k}}))}{\partial x_{j}}\label{eq:chain-rule-3}
\end{align}

As a result, the score function estimator only need to predict the
single score $\partial\log p_{Z_{i}}(z_{i})/\partial z_{i}$ for each
$z_{i}$. The score for the remaining noises $\mathbf{z}_{<i}$ can
then be computed using the chain rules (ii) and (iii) above. Therefore,
we will use two set of estimation models, (1) the score models $\{s^{i}(z_{i};\theta_{i})\}_{i=1,\dots,n}$
with parameters $\{\theta_{i})\}_{i=1,\dots,n}$; and (2) the mean
models $\{f_{i}(\mathbf{x}_{\Pa_{i}};\phi_{i})\}$ with parameters
$\{\phi_{i})\}_{i=1,\dots,n}$. We will describe how to train the
score models and the mean models next. Then, we will use an automatic
differentiation library to compute the partial derivatives $\partial f_{i}/\partial x_{j}$
for each mean model $f_{i}$ in the experiments. 

\subsubsection{Sampling Data and Gradient Paths}

To perform outlier attribution using our trained score networks (see Section~\ref{subsec:Score-based-Integrated-Gradient}), we first define the joint score function over all nodes $i=1, \dots, n$ as 
\[
\mathbf{s}(\mathbf{z}, t; \theta) = [s^1(z_1, t; \theta_1), \dots, s^n(z_n, t; \theta_n)]^T,
\]
where each $s^i$ operates on its local input $z_i$ and produces a gradient estimate. Given an outlier sample $\mathbf{z}$, we sample $m$ in-distribution trajectories $\{\mathbf{z}^{(k)}(t)\}_{k=1}^m$ by simulating the reverse-time SDE,
\begin{equation}\label{eq:reverse-sde}
d\mathbf{z} = -\sigma^2(t)\mathbf{s}(\mathbf{z}, t; \theta)\,dt + \sigma(t)\,d\mathbf{w},
\end{equation}
using Euler–Maruyama integration \citep{song2021scorebased}. These trajectories interpolate between the outlier and the distributional manifold.

Along each trajectory, we compute the gradient of the log-likelihood via integrated gradients, and apply the chain rules from Eqs.~\ref{eq:chain-rule-1}–\ref{eq:chain-rule-3} to estimate the influence of each latent variable $Z_j$ on the outlier $X_n$. This yields a Monte Carlo estimate of the attribution score in Eq.~\ref{eq:score-measure-MC}.

Algorithm~\ref{alg:score_sampling} provides the detailed procedure for sampling diffusion trajectories and storing the score vectors and diffusion steps. Unlike generative score-based models, we explicitly integrate along the reverse SDE to approximate a path integral over the log-density gradients, enabling principled attribution back to latent noise sources. This latent-space attribution approach distinguishes our method from prior work operating over observed variables.

\begin{algorithm}[H]
\caption{Sampling Trajectories and Storing Gradients and Diffusion Steps}
\label{alg:score_sampling}
\begin{algorithmic}[1]
\STATE \textbf{input:} trained score network $s_\theta(z, t)$, initial input sample $z_0$, time steps $t_1, \dots, t_n$, time step size $\Delta t$, maximum diffusion scale $\sigma_{\max}$.

\STATE \textbf{init:} initial sample for reverse diffusion $z \gets z_0$, set of scores $ \mathcal{S} \gets \emptyset$, set of diffusion steps $\mathcal{D} \gets \emptyset$.
\FOR{$i = 1$ to $n$}
    \STATE $\sigma^2 \gets (\sigma_{\max}^{2t_i} - 1) / (2 \log \sigma_{\max})$ \hfill \COMMENT{Diffusion coefficient}
    \STATE $\nabla z \gets s_\theta(z, t_i)$ \hfill \COMMENT{Evaluate score at $z$}
    \STATE $\mu_z \gets z + \sigma^2 \cdot \nabla z \cdot \Delta t$ \hfill \COMMENT{Euler–Maruyama mean update}
    \STATE $z_{\text{new}} \gets \mu_z + \sigma \cdot \sqrt{\Delta t} \cdot \epsilon$, where $\epsilon \sim \mathcal{N}(0, I)$ \hfill \COMMENT{Add Gaussian noise}
    \STATE $\mathcal{S} \gets \mathcal{S} \cup \{ \nabla z \}$ \hfill \COMMENT{Store score vector}
    \STATE $dz_i \gets z_{\text{new}} - z$ \hfill \COMMENT{Compute diffusion step}
    \STATE $\mathcal{D} \gets \mathcal{D} \cup \{ dz_i \}$ \hfill \COMMENT{Store diffusion step}
    \STATE $z \gets z_{\text{new}}$ \hfill \COMMENT{Set new $z$}
\ENDFOR
\RETURN score trajectories $\mathcal{S}=\{ \nabla s(z, t_i)\}_i$ of shape $(d, n)$ and diffusion steps $\mathcal{D}=\{dz_i\}_i$ of shape $(d, n)$.
\end{algorithmic}
\end{algorithm}


\subsubsection{Fitting Mean Models}

While the score-based framework in Section~3.2.3 enables the attribution of an outlier observation $z$ to the underlying latent perturbations via gradient paths, it does not by itself quantify how these perturbations propagate through the structural causal model. To estimate such propagation effects, particularly how noise sources $Z_j$ influence downstream variables $X_i$, we must recover the functional relationships $f_i$ that define the structural assignments $X_i = f_i(\text{Pa}_i) + Z_i$. These impact functions $f_i$ provide a functional semantics to the directed edges in the causal graph. Thus, we fit the regression models for each node $i$ as described below.

For each node $i$, we use a regression neural networks with parameters
$\phi_{i}$ for modeling the mean $\E X_{i}=\E(f_{i}(X_{\Pa_{i}};\phi_{i})+Z_{i})=f_{i}(X_{\Pa_{i}};\phi_{i})$,
where we assume zero mean noise $Z_{i}$. We train the network using
regularized regression:

\[
\phi_{i}^{*}=\argmin_{\phi_{i}}\E_{\mathbf{x}}\|\mathbf{x}_{i}-f_{i}(\mathbf{x}_{\Pa_{i}})\|^{2}+\eta\|\phi_{i}\|^{2}
\]
where $\eta$ is regularization parameter.

In the case of additive noise model, the additive noise model fit reduces to MSE error as mentioned above. For the location scale model,
\[
p(y \mid x) = \mathcal{N}(y \mid \mu_\phi(x), \sigma_\phi(x)^2)
\]
we use the following negative log-likelihood objective:
\[
\text{NLL}(\phi \mid x, y) = \frac{(y - \mu_\phi(x))^2}{2\sigma_\phi(x)^2} + \frac{1}{2} \log(\sigma_\phi(x)^2)
\]

\( f_j \) can be modeled using neural networks trained with mean squared error (MSE) loss 
\( \mathbb{E}[(X_j - f_j(\Pa_j))^2] \). The optimal predictor under this loss is the conditional expectation 
$ f_j(\Pa_j) = \mathbb{E}[X_j \mid \Pa_j], $
\cite[Sec.~1.5.5]{bishop2006pattern}. This result holds regardless of the noise distribution. The variance \( \mathrm{Var}(X_j \mid \Pa_j) \), represents the irreducible uncertainty in the target, captured by the noise term \( Z_j \).

\subsection{Discussions}

Extending the chain rule decomposition from additive noise models (ANMs), our key contribution is integrating this principle into a diffusion-based attribution framework operating in the latent noise space. By tracing the influence of noise perturbations through diffusion trajectories governed by the reverse SDE in Eq.~\ref{eq:reverse-sde}, we enable gradient-based attribution. Unlike prior approaches that compute attribution scores with respect to observed variables, we formulate attribution in terms of latent noise variables using score-based diffusion modeling and Monte Carlo sampling of gradient paths, as detailed in Section~\ref{sec:Methods}.

Our score-based attribution method inherits desirable properties from the Shapley value framework in cooperative game theory. Specifically, the attribution scores computed via integrated gradients over the negative log-likelihood (NLL) satisfy three of the four classic Shapley axioms—dummy, efficiency, and linearity. The fourth axiom, symmetry, does not generally hold in our setting; instead, we replace it with a context-specific asymmetry property induced by the underlying causal structure, wherein different variables may play inherently asymmetric roles in generating outliers. This theoretical grounding supports the faithfulness of our explanations: outlier nodes receive high attribution scores due to their disproportionate contributions to log-likelihood deviations. Furthermore, we interpret these scores as being approximately proportional to differences in tail probabilities, which we estimate using a triangle rule over the diffusion gradient path (see Fig.~\ref{fig:contribution-Shapley-vs-IG}).

Following~\cite{budhathoki2022causal}, we adopt the mechanism perturbation view of outliers, interpreting them as structural deviations in functional assignments or shifts in the noise-generating process. In the FCM framework, the exogenous noise variable determines the selection of deterministic mechanisms. Therefore, an outlier does not merely correspond to an unusual noise realization but reflects a structural shift in the response function. This view allows us to model outliers as arising from changes in mechanism, not just from stochastic noise. Rather than relying on density-based tail estimates (e.g., Eqs.~\eqref{eq:IT-score} and~\eqref{eq:residual}), which are often intractable or restricted due to privacy or storage constraints, we use Eq.~(11) in conjunction with integrated gradients to approximate the tail probability and explain the outlier behavior effectively.

Interpreting outliers as violations of causal invariance rather than extremal additive noise realizations is a more principled and informative perspective. Our approach is indeed aligned with this interpretation, as in \cite{budhathoki2022causal} and in \cite{nguyen2024root}. We further extend this idea by incorporating location-scale models, allowing the mean and variance of the noise to depend on the parents, thus supporting heteroscedastic and context-sensitive mechanism changes.

Our framework builds upon standard assumptions in causal inference~\cite{budhathoki2022causal,peters2017elements}, notably that the data-generating process follows a Functional Causal Model (FCM) structured as a directed acyclic graph (DAG) with causal sufficiency. We assume that the noise variables $\{Z_j\}$ are jointly independent and that the causal mechanisms are locally identifiable, meaning the score functions of the noise distributions and their perturbations are estimable from data, enabling gradient-based inference.

\section{Experiments }\label{sec:Experiments}

In this section, we evaluate our method, SIREN, in outlier attribution
tasks on random graph datasets and in a micro cloud services scenario.
We compare \textbf{SIREN} to the following baselines: 
\begin{enumerate}
\item \textbf{Naive}: This method simply uses the $z\text{-score}$ of the
marginal distribution of the observational data of each node $X_{i}$
as its contribution value to the target outlier measure. The ranking
of root causes is based on these $z\text{-scores}$, a higher ranking
corresponds to a higher z-score. 
\item \textbf{Traversal}: \cite{lin2018microscope,liu2021microhecl} assumes
a root cause is a node whose parent observations are normal, but there
is a descendant path of anomalous nodes linking it to the target outlier
node. 
\item \textbf{CIRCA}: \cite{li2022causal} models the functional causal
mechanisms (FCMs) using linear regression with Gaussian noise. The
authors compute the residual, Eq.\ \ref{eq:residual}, of an outlier
observation and use it for ranking root causes. 
\item \textbf{CausalRCA}: \cite{budhathoki2022causal} fits the FCMs using
linear regression. The authors use the Shapley value attribution method
to compute the attribution score of each node. They rewrite the target
outlier node as a function of all ancestor noises, including itself,
Eq.\ \ref{eq:noise-dependent-func}. The marginal contribution of
each noise source to the target outlier leaf, given all randomized
noise contexts is used for ranking root causes. 
\item \textbf{BIGEN}: \cite{nguyen2024root} fits a Bayesian linear regression
to the FCMs and use integrated gradients of the noise sources as the
leaf outlier contributions for ranking root causes. 
\end{enumerate}

\subsection{Random Graph Datasets}

Hyper-parameters for SIREN in Table.~\ref{tab:siren_networks}

\begin{table}[H]
\centering
\begin{tabular}{lccc}
\toprule
\textbf{Network} & \textbf{Number of layers} & \textbf{Hidden size} & \textbf{Activation} \\
\midrule
mean model $f_j$     & 2 & 100 & tanh \\
location model $\mu_j$ & 2 & 100 & tanh \\
scale model $\sigma_j$  & 2 & 100 & $0.5 + \tanh$ \\
score model $s_j$     & 3 & 100 & swish \\
\bottomrule
\end{tabular}
\caption{Network architecture details for SIREN.}
\label{tab:siren_networks}
\end{table}

BIGEN, CausalRCA, CIRCA, and Traversal use a similar model as the mean model $f_j$. The BIGEN model uses Gaussian Dropout to multiply each activation by noise sampled from a Gaussian distribution with mean 1.0 and standard deviation 0.1. All training is done using the Adam optimizer with default parameters in 100 epochs. 

In this experiment, we use random graph datasets to compare the performance
of all methods. We randomly generate 100 causal graphs with varying
number of nodes in the range from 50 to 100. For each graph, we select
the subgraph with a depth of at least 10 and having only one leaf
for study. For the FCMs, we use 3-layer MLPs with 50 hidden units
and a ReLU activation function in the hidden layer. We then draw normal
data from these FCMs, following the causal structure. To synthesize
the outliers, we randomly select between 1 to 3 nodes as root causes.
We inject outlier noise into the nodes to create the ground truths
by changing the scale of these noise sources three times. The noise
distribution for each FCM is randomly selected as a mixture of Gaussians
with 2 components, with means sampled from $\Ncal(0,1)$, both having
the same variance of $1$, and the prior weights are $[\alpha,1-\alpha]$,
where $\alpha\in[0.1,0.9]$.

Table.\ \ref{tab:random-graph} compares the average NGCG@$k$ ranking
on random graph datasets. SIREN outperforms all the baselines in root
cause detection ranking on average across top-$k$ measures. CIRCA
and BIGEN perform comparably and are better than CausalRCA, Traversal,
and Naive method.

\begin{table}
\begin{centering}
\begin{tabular}{|c|c|}
\hline 
 & Average NGCG@$k$ ranking\tabularnewline
\hline 
\hline 
SIREN (ours) & \textbf{$\mathbf{85.0\pm11.5}$}\tabularnewline
\hline 
CIRCA & $73.1\pm18.1$\tabularnewline
\hline 
BIGEN & $75.7\pm17.5$\tabularnewline
\hline 
CausalRCA & $58.6\pm18.3$\tabularnewline
\hline 
Traversal & $51.4\pm21.8$\tabularnewline
\hline 
Naive & $41.6\pm19.6$\tabularnewline
\hline 
\end{tabular}
\par\end{centering}
\caption{Average NGCG@$k$ ranking in the root cause attribution task on random
graph datasets.}
\label{tab:random-graph}
\end{table}

Fig.\ \ref{fig:random-graph} shows the detailed top-$k$ rankings
of the methods. At the top-$1$ ranking, all three methods - SIREN,
CIRCA, and BIGEN - correctly detect the first root cause, performing
closely at about 83\%. However, for the top-$2$ and top-$3$ rankings,
only SIREN maintains accurate rankings, while CIRCA and BIGEN drop
to about 60\% NGCG score. This suggests that the data distribution
is complex and requires better distribution matching to detect more
than one root cause in ancestor nodes. SIREN's score matching ability
gives it an advantage in this case. At higher top-k rankings, all
methods seem to converge, but SIREN still performs the best among
the six methods.
\begin{figure}
\begin{centering}
\includegraphics[width=0.95\columnwidth]{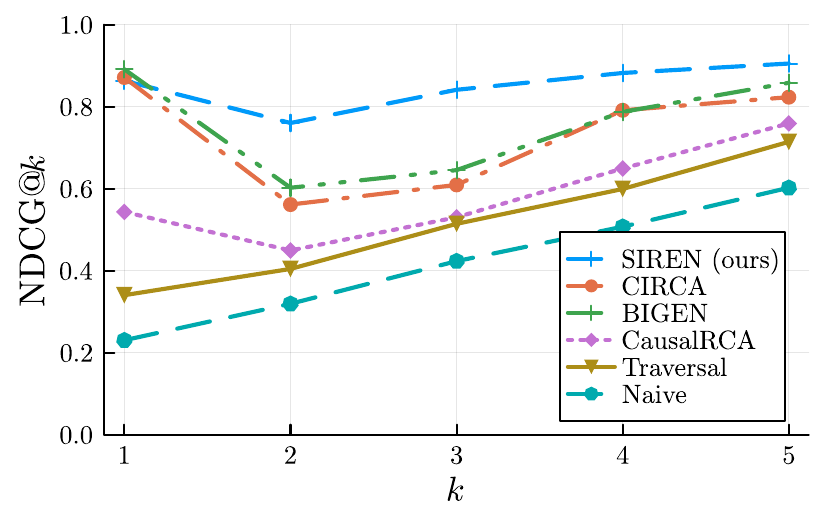}
\par\end{centering}
\caption{NDCG@$k$ ranking across different $k$ values.}
\label{fig:random-graph}
\end{figure}

\subsection{Root Causes of Cloud Service Latencies}

In this experiment, we compare our methods with the baselines in a
simpler, realistic scenario within a small microservice environment.
Specifically, we investigate and identify the root causes of observed
latencies in cloud services for an online shop \cite{guo2022joint,ikram2022root,dowhy_gcm}.
We consider normal random delay noises at each service $X_{j}$ to
be sampled from a half-normal distribution with a mean between 100--500
ms and a scale between 100--200 ms. We randomly select between 1--3
nodes and change their means and noise scales three times. Our objective
is to identify the root causes of unwanted latency experienced by
customers when processing online orders. This service interacts with
multiple other web services, which are represented by a causal dependency
graph that includes ten additional services \cite{dowhy_gcm}. We
assume that we are observing latencies in the order confirmation at
the website's leaf node, and that all services operate in synchronization.
We use similar models as in the previous section for this experiment,
assuming the causal graph is given, and fit the noisy FCM models to
the training data collected during normal operations. Due to the linearity
of the problem, we use linear regression instead of neural network
for fitting the mean functions in SIREN.

Table~\ref{tab:micro} compares the root cause rankings of all methods.
It shows that SIREN, BIGEN, and CausalRCA can correctly rank root
causes of delays on average for these online shop cloud services.
CIRCA performs reasonably, achieving 72.3\% NDCG@$k$ on average,
which may be due to the difficulty posed by asymmetric noise sources,
making it fail in this context because it simply uses the $z$-score
with the fitted mean function. This creates additional challenges
for the Traversal and Naive methods.

\begin{table}
\begin{centering}
\begin{tabular}{|c|c|}
\hline 
 & Average NGCG@$k$ ranking\tabularnewline
\hline 
\hline 
SIREN (ours) & $\mathbf{92.1\pm3.3}$\tabularnewline
\hline 
CIRCA & $72.3\pm8.1$\tabularnewline
\hline 
BIGEN & $89.4\pm3.7$\tabularnewline
\hline 
CausalRCA & \textbf{$\mathbf{92.2\pm2.9}$}\tabularnewline
\hline 
Traversal & $50.4\pm3.6$\tabularnewline
\hline 
Naive & $43.8\pm12.2$\tabularnewline
\hline 
\end{tabular}
\par\end{centering}
\caption{Average NDCG@$k$ ranking in an online shop cloud service.}
\label{tab:micro}
\end{table}

Fig.~\ref{fig:micro-service} shows that CausalRCA can perfectly
rank the first root cause, while it is comparable to SIREN and BIGEN
for larger $k$. All methods exhibit better NDCG@$k$ scores with
larger $k$ values, as the chances of relevant root causes appearing
increase. 
\begin{figure}
\begin{centering}
\includegraphics[width=0.95\columnwidth]{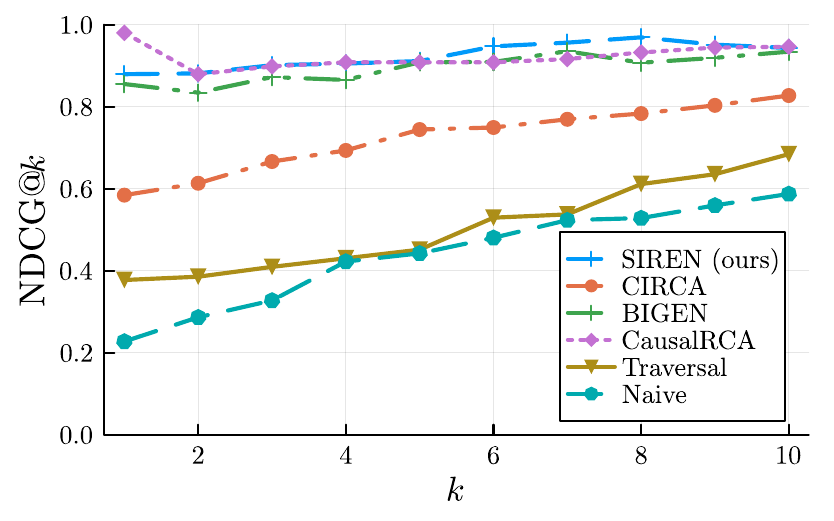}
\par\end{centering}
\caption{NDCG@$k$ with varying the number of $k$.}
\label{fig:micro-service}
\end{figure}

\subsection{Root Causes of Supply Chain Latencies}

In this experiment, we simulated a supply chain latency dataset with the interactions between vendors and retailers. The real-world delays and decision-making overheads introduce stochastic variability. This experiment captures realistic structural dependencies, time-lagged effects, and non-additive noise patterns that arise in business operations and logistics, closely mirroring real causal phenomena. The results showed that our model effectively identifies latent root causes of outliers observed at the final "Received" node, and outperforms baseline methods on NDCG@k metrics (Table 2), showing robustness and accuracy in attribution under realistic noise and delay distributions.

We based on the root cause analysis in a real world supply chain scenario using the simulator in [Nguyen et al., 2024]. This experiment addresses the challenge of identifying root causes of outlier events (e.g. delays) in a supply chain system. The setting simulates a realistic retail-vendor chain with stochastic delays, modeled using location scale data generating process. The supply chain works as follow. A retailer places orders based on forecasted demand and inventory constraints. A vendor then confirms orders and ships goods with random delays. The delays simulate business overheads like managerial decisions or variable operations. Outliers are injected by perturbing the noise, e.g. Uniform(3, 5). The objective is to identify the root causes of outliers at the final node (Received) by ranking influential upstream variables. Table~\ref{tab:ndcg_k} shows the average NDCG@k ranking results from different runs:

\begin{table}[H]
\centering
\caption{NDCG@k ranking across five k values by different methods}
\label{tab:ndcg_k}
\begin{tabular}{lcccc}
\toprule
\textbf{Method} & \textbf{k=1} & \textbf{k=2} & \textbf{k=3} & \textbf{k=4} \\
\midrule
SIREN & $92.7 \pm 2.1$  & $81.6 \pm 6.7$  & $80.4 \pm 7.0$  & $82.0 \pm 5.2$  \\
CIRCA      & $71.0 \pm 41.6$ & $72.0 \pm 27.3$ & $73.7 \pm 18.5$ & $80.5 \pm 15.7$ \\
BIGEN      & $56.8 \pm 43.2$ & $56.9 \pm 36.3$ & $63.6 \pm 29.2$ & $72.5 \pm 21.5$ \\
CausalRCA  & $77.0 \pm 22.0$ & $68.1 \pm 16.1$ & $65.5 \pm 12.5$ & $71.6 \pm 12.6$ \\
Traversal  & $62.8 \pm 31.5$ & $70.3 \pm 25.4$ & $73.7 \pm 20.3$ & $77.7 \pm 14.3$ \\
Naive      & $53.5 \pm 29.5$ & $61.1 \pm 18.9$ & $64.4 \pm 13.9$ & $69.8 \pm 7.7$  \\
\bottomrule
\end{tabular}
\end{table}

\section{Related Work}

A closely related method to ours is BIGEN \cite{nguyen2024root},
in which the authors also use integrated gradients for root cause
attribution. However, their method only works for linear Gaussian
FCMs, which limits root cause analyses to linear and Gaussian noise
assumptions. They also require access to the training data for attribution.
Another closely related method is the pioneering CausalRCA \cite{budhathoki2022causal},
where the authors base their approach on the causal structure of the
data and use Shapley values for outlier attribution. However, this
method requires linear FCMs, and training data for root cause attribution,
and its computation is expensive. CIRCA \cite{li2022causal} models
the data using linear FCMs and uses the distance to the predicted
mean for ranking root causes. \cite{okati2024root} utilize the topological
order of a causal structure and consider only a single root cause.
The authors proposed to traverse the nodes in reverse topological
order and use changes in log probabilities to identify the root cause.
Earlier methods employing the traversal approach include \cite{chen2014causeinfer,lin2018microscope,liu2021microhecl}.
However, these methods use a heuristic algorithm to identify a root
cause by checking its parents for non-anomalous activity and finding
a path of anomalous nodes linking to the target outlier.

\section{Conclusion}

We proposed SIREN, a score-based integrated gradient framework for root cause attribution in complex, noisy systems. SIREN attributes an outlier measure at a leaf node to its ancestor nodes via integrated gradients, identifying likely root causes. Our framework is built upon a conditional score matching model that enables efficient computation of integrated gradient-based outlier scores, facilitated by a decomposed score function. Unlike prior methods, SIREN supports nonlinear, heteroscedastic causal models by modeling location-scale noise and interpreting outliers as violations of causal invariance rather than extreme noise realizations. Extensive theoretical analysis and empirical evaluations confirm the effectiveness, scalability, and interpretability of our approach.

\bibliographystyle{unsrt}
\bibliography{causal}

\end{document}